\newcommand\scalemath[2]{\scalebox{#1}{\mbox{\ensuremath{\displaystyle #2}}}}
\DeclareMathOperator{\diag}{diag}
\newtheorem{assumption}{Assumption}
\newtheorem{proposition}{Proposition}
\newtheorem{defn}{Definition}
\title{\LARGE \bf
Admittance Control Parameter Adaptation for Physical Human-Robot Interaction
}
\author{Chiara Talignani Landi, Federica Ferraguti, Lorenzo Sabattini, Cristian Secchi and Cesare Fantuzzi
  \thanks{Authors are
    with the Department of Sciences and Methods for Engineering,
    University of Modena and Reggio Emilia, Italy.  {\tt\small
      \{chiara.talignanilandi, federica.ferraguti, lorenzo.sabattini, cristian.secchi, cesare.fantuzzi\}@unimore.it}}%
}
\begin{document}

\maketitle
\thispagestyle{empty}
\pagestyle{empty}

\begin{abstract}
In physical human-robot interaction, the coexistence of robots and humans in the same workspace requires the guarantee of a stable interaction, trying to minimize the effort for the operator. To this aim, the admittance control is widely used and the appropriate selection of the its parameters is crucial, since they affect both the stability and the ability of the robot to interact with the user.
In this paper, we present a strategy for detecting deviations from the nominal behavior of an admittance-controlled robot and for adapting the parameters of the controller while guaranteeing the passivity. The proposed methodology is validated on a KUKA LWR 4+.     
\end{abstract}

\section{INTRODUCTION}
\label{introduction}
In physical human-robot interaction (pHRI) tasks, robots are designed to coexist and cooperate with humans in different applications within the same workspace, hence main concerns are related to safety and dependability. To obtain a compliant behavior on industrial robots, admittance control \cite{handbook} is typically utilized, since they are characterized by a stiff and non-backdrivable mechanical structure.
For example, admittance control has been exploited in \cite{iecon,hfr} to develop a walk-through programming technique on a position-controlled robot. The appropriate selection of the admittance parameters is crucial, since they define the way the robot interacts with the user.
Several researches have been performed to investigate the stability of a robot under admittance control during the interaction with a human operator. In \cite{peer} a stability analysis shows a strong dependency on the human arm impedance.
In particular, instability occurs when the human operator grasps the tool attached at the robot end-effector in a very stiff manner. Moreover, it was shown that it is always possible to find values for the inertia and damping parameters that lead the system to instability.
In \cite{eppinger,colgate}, the reason for instability was identified in the non-colocation principle.
All the identified instability issues produce, as a consequence, a deviation of the robot behavior from the desired one, imposed through the admittance control. This deviation results in high amplitude oscillations of the end-effector, that render the interaction with the robot unsafe for the user. To reduce them and restore the stability of the system the adaptation of the admittance parameters is usually performed.
In \cite{Gallagher,Lamy,Podobnik} the control gains and an estimate of the operator's stiffness are obtained using additional devices as EMG and force/torque sensors.
In \cite{tsumugiwa2} the authors increase the damping coefficient proportionally to the estimated stiffness of the human arm. 
In \cite{dimeas} the authors augment the desired inertia keeping a low damping, exploiting a frequency domain analysis on external forces. \\
\indent This paper proposes a novel methodology for adapting online the parameters of admittance-controlled robot used in pHRI tasks. 
 To achieve this objective, we will first introduce a heuristic that allows to easily detect online the deviations of the robot behavior from the nominal one. Then we will present a method for adapting the parameters to restore the nominal behavior without excessively increasing the pHRI effort for the user. Passivity will be exploited for ensuring safety in pHRI: in fact, as is well known \cite{secchi_book}, ensuring passivity leads to guaranteeing stability of the robot interacting with a possibly unknown environment. As discussed in~\cite{dimeas}, passivity often leads to defining over-conservative gains, but less conservative solutions can be achieved exploiting energy tanks~\cite{secchi}. In~\cite{Fer15tro} they have been exploited for passively changing inertia and stiffness in admittance control but the strategy proposed is still over-conservative since it exploits the energy dissipated even for reproducing passive situations (e.g. constant inertia and constant stiffness). This unwise use of energy needs a sufficiently big damping to be implemented. 
In this paper we will define a novel parameter adaptation methodology, that extracts energy from the tank only when strictly necessary (i.e. during variation of the parameters). This solution allows to implement the parameters variation while keeping a low damper and reducing the physical human effort.
\section{DETECTION OF UNDESIRED SITUATIONS}
\label{problem}
%
Let us consider a $n$-degrees of freedom ($n$-DOFs) manipulator controlled by using the admittance control.
Given a desired interaction model, namely a dynamic relation between the motion of the robot and the force applied by the environment, and given the external force, the corresponding position of the robot \mbox{$x_{ref}\in\mathbb{R}^n$} is tracked by means of a low-level position controller. 
In particular, in this paper we want to address the case of an industrial robotic manipulator manually driven by the human operator, hence we don't specify a desired pose (i.e. the elastic part in the model).
Define now \mbox{$x_{ref}\in\mathbb{R}^n$} as {the reference position computed by the admittance controller and $x=f(q)$ the pose of the end-effector, obtained from the joint positions $q\in\mathbb{R}^m$, $m \geq n$, through the forward kinematic map $f(\cdot)$}. We will hereafter make the following assumption:
	\begin{assumption}
		The low-level position controller is designed and tuned in such a way that the tracking error is negligible, namely \mbox{$x \simeq x_{ref}$}.
		\label{ass:pidideal}
	\end{assumption}
 Thus, under Assumption \ref{ass:pidideal}, we want to force the robot to interact with the environment according to a given mass-damper system characterized by:
\begin{equation}\label{eq:follower}
M_d \ddot{x} + D_d \dot{x} = F_{ext}
\end{equation} 
where $M_d$ and $D_d$ are the constant inertia and damping matrices, which are the parameters in the described control.
The external force $F_{ext}\in\mathbb{R}^n$ in \eqref{eq:follower} is assumed to be measured by a 6-DOF force/torque (F/T) sensor attached at the robot wrist flange.	
The controlled robot behaves as \eqref{eq:follower} and it is passive with respect to the pair $\left(F_{ext},\dot{x}\right)$. 

{We will hereafter introduce the following definition of \emph{nominal behavior}:
	\begin{defn}\label{def:nominal}\textbf{[Nominal behavior]}
		Consider the desired dynamic behavior for the admittance controlled robot~\eqref{eq:follower}. Then, under Assumption~\ref{ass:pidideal}, we define as \emph{nominal} the behavior of the robot when the following inequality holds
		\begin{equation}
		\label{eq:epsilon}
		\psi\left(\dot{x},\ddot{x},F_{ext}\right) =\Vert F_{ext}-M_d \ddot{x}-D_d\dot{x} \Vert \leq \varepsilon
		\end{equation}
		for an appropriately defined small $\varepsilon>0$. 
	\end{defn}
	}

For ease of notation, we will hereafter suppress the arguments of $\psi$. 
Since the acceleration is required, its value has to be measured using additional hardware (e.g. accelerometers) or estimated by means of filters (e.g. Kalman filter).

Since, during the execution of the cooperative task, the robot is coupled with a human operator, deviations from the nominal behavior could arise when he/she stiffens his/her arm. This causes oscillations of high amplitude that render the interaction unsafe for the user.
%
%
Considering Definition~\ref{def:nominal}, 
%
%
we propose to utilize~\eqref{eq:epsilon} as a heuristic for detecting deviations from the nominal behavior. Namely, when~\eqref{eq:epsilon} is not satisfied, we claim that the robot is deviating from the nominal behavior.
In order to detect when $\psi$ is over the threshold and the deviation from the nominal behavior occurs, avoiding false positives, a low pass filter has to be implemented.

\section{PASSIVE PARAMETER ADAPTATION}
\label{passivity}
{In this Section we will show how to adapt the parameter of the admittance control in order to restore the nominal behavior of the controlled robot in the presence of deviations, identified according to the procedure illustrated in Section~\ref{problem}. The proposed method will aim at minimizing the deviation of the admittance parameters from the desired interaction model. 

In the following, we will focus on adaptation of the parameters performed within a limited amount of time {and we will assume that, between two consecutive variations of inertia, a sufficient period of time elapses in which the inertia remains constant.} Namely, focusing on a single adaptation, we make the following assumption:
\begin{assumption}\label{ass:limitedtime}
	The variation of inertia and damping occurs in a limited time interval $\left[t_i,t_f\right]$. 
\end{assumption}	
	
	}

If the parameters have to be adapted, then the desired interaction model in \eqref{eq:follower} becomes the following variable admittance model:
\begin{equation}\label{eq:newadmittancemodel}
M_d(t) \ddot{x} + D_d(t) \dot{x} = F_{ext}
\end{equation}   
where $M_d(t)$ and $D_d(t)$ are the time-varying inertia and damping matrices.
In order to preserve their physical meaning, we assume that
$M_d(t)$ and $D_d(t)$ are symmetric and positive definite for all
$t\geq 0$. 
The main drawback due to the introduction of a variable interaction
model in an admittance control scheme is the loss of passivity of
the controlled robot \cite{Fer15tro}.

In this Section we will formulate a passivity framework for adapting the parameters while maintaining the passivity of the overall system. 
%
To guarantee the passivity of the system, the following proposition can be stated. 
\begin{proposition}
\label{prop:passivity1}
If 
\begin{equation}
\label{eq:cond1}
\dot{M}_d(t)-2D_d(t) \leq 0
\end{equation}
then the system in \eqref{eq:newadmittancemodel} is passive with respect to the
input-output pair $(F_{ext},\dot{x})$ and with storage function 
\begin{equation}
H(\dot{x})=\dfrac{1}{2}\dot{x}^TM_d(t)\dot{x}
\label{eq:storagenew}
\end{equation}
\end{proposition}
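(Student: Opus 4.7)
The plan is to verify passivity by the standard route of differentiating the candidate storage function along solutions of the controlled dynamics and showing that the result is bounded above by the supply rate $F_{ext}^T\dot{x}$. First I would compute $\dot{H}$ using the product rule, exploiting the assumed symmetry of $M_d(t)$:
\begin{equation*}
\dot{H} = \dot{x}^T M_d(t)\ddot{x} + \tfrac{1}{2}\dot{x}^T \dot{M}_d(t)\dot{x}.
\end{equation*}
Then I would substitute the admittance dynamics~\eqref{eq:newadmittancemodel} in the form $M_d(t)\ddot{x} = F_{ext} - D_d(t)\dot{x}$ into the first term, yielding
\begin{equation*}
\dot{H} = F_{ext}^T\dot{x} - \dot{x}^T D_d(t)\dot{x} + \tfrac{1}{2}\dot{x}^T \dot{M}_d(t)\dot{x} = F_{ext}^T\dot{x} - \tfrac{1}{2}\dot{x}^T\bigl(2D_d(t) - \dot{M}_d(t)\bigr)\dot{x}.
\end{equation*}

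At this point the hypothesis~\eqref{eq:cond1} is exactly the statement that the matrix $2D_d(t) - \dot{M}_d(t)$ is positive semidefinite, hence the quadratic term is nonnegative and we obtain $\dot{H} \leq F_{ext}^T\dot{x}$. Integrating over $[0,t]$ and using $H(\dot{x}) \geq 0$, which holds because $M_d(t)$ is symmetric positive definite by the standing assumption stated just after~\eqref{eq:newadmittancemodel}, gives
\begin{equation*}
\int_0^t F_{ext}^T\dot{x}\,d\tau \geq H(\dot{x}(t)) - H(\dot{x}(0)) \geq -H(\dot{x}(0)),
\end{equation*}
which is the standard passivity inequality with respect to the pair $(F_{ext},\dot{x})$.

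The computation is essentially routine; the only two points requiring a little care are using the symmetry of $M_d(t)$ when differentiating the quadratic form so that the cross term collapses into a single $\dot{x}^T M_d\ddot{x}$, and recognizing that the hypothesis is phrased so that $2D_d(t) - \dot{M}_d(t)$, not $\dot{M}_d(t) - 2D_d(t)$, is the matrix whose sign controls dissipation. No additional tank or energy-routing machinery is needed for this statement; the role of those constructions is to decide when \eqref{eq:cond1} can be enforced in practice, which is a question for the subsequent sections rather than for the proposition itself.
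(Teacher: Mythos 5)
Your proposal is correct and follows essentially the same route as the paper's proof: differentiate $H$, substitute the admittance dynamics to obtain $\dot H = \dot x^T F_{ext} + \tfrac{1}{2}\dot x^T(\dot M_d - 2D_d)\dot x$, use condition~\eqref{eq:cond1} to drop the quadratic term, and integrate using $H \geq 0$. The only difference is cosmetic (you state the differential inequality before integrating, while the paper integrates first), so there is nothing to add.
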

\begin{proof} With a slight abuse of notation, we will hereafter use $H(t)$ to indicate the value of $H(\dot{x})$ at time $t$. We have that:
\begin{equation}
  \label{eq:variablebalancepass}
\dot{H}(t) = \dot{x}^{T}M_d(t)\ddot{x}+\frac{1}{2}\dot{x}^{T}\dot{M}_d(t)\dot{x}
\end{equation}
Computing $\ddot{x}$ from \eqref{eq:newadmittancemodel} and replacing it in \eqref{eq:variablebalancepass} we obtain:
\begin{equation}
  \label{eq:variablebalance}
\dot{H}(t) = \dot{x}^TF_{ext}+\frac{1}{2}\dot{x}^T\left(\dot{M}_d(t)-2D_d(t)\right)\dot{x}
\end{equation}
whence
\begin{equation}
\resizebox{\hsize}{!}{$\displaystyle\int_{0}^{t} \dot{x}^T F_{ext} \, d \tau=H(t)-H(0)-\displaystyle\int_{0}^{t} \left[
  \frac{1}{2}\dot{x}^T\left(\dot{M}_d(t)-2D_d(t)\right)\dot{x}\right] \, d \tau$}
\end{equation}
Because of the variability of the inertia, in general the term between the brackets can be positive and the system can produce
energy. However, recalling that $H(t) \geq 0$, the condition \eqref{eq:cond1} allows to guarantee the passivity. Indeed:
\begin{equation}
\displaystyle\int_{0}^{t} \dot{x}^T F_{ext} \, d \tau \geq H(t)-H(0) \geq -H(0)
\label{eq:balance}
\end{equation}
\end{proof}
The condition \eqref{eq:cond1} is instantaneous and it guarantees that the effect due to the variation of the inertia, i.e. $\frac{1}{2}\dot{x}^T\left(\dot{M}_d(t)-2D_d(t)\right)\dot{x}$, is always dissipative. As shown in Prop. \ref{prop:passivity1}, this is sufficient for guaranteeing the passivity of the energy balance in \eqref{eq:balance}.

{Nevertheless, as pointed out in \cite{dimeas},} such a condition can be quite conservative since the passivity definition is based on energy balances rather than on power balances. Thus, if a sufficient amount of energy has been dissipated, some active behavior due to the inertia variation can be still allowed without violating the overall passivity of the system. 


In the following, we exploit energy tanks \cite{Fer15tro} for keeping track of the energy dissipated by the system and to develop a less conservative condition for guaranteeing passivity. This allows much more aggressive inertia variations at the price of a more complex control structure and some extra parameters to tune. Unlike \cite{Fer15tro}, the energy of the tank is used only in case some energy can be produced and not every time the desired inertia value is different from the nominal one.  

Then, we augment the interaction model \eqref{eq:newadmittancemodel} with an energy storing element, the tank, whose role is to store the energy dissipated by the controlled system. Formally, the augmented dynamics is given by:
\begin{equation}
\left\{
\begin{array}{l}
M_d(t) \ddot{x} + D_d(t) \dot{x} = F_{ext} \\
\displaystyle \dot{z}= \dfrac{\varphi}{z}P_D-\dfrac{\gamma}{z}P_{M}\\
\end{array}
\right.
\label{eq:newdynamic}
\end{equation}
where
\begin{equation}
  \label{eq:PdPm}
 \begin{array}{lr}
  P_D=\dot{x}^TD_d(t)\dot{x} \quad & \quad 
  P_M=\frac{1}{2}\dot{x}^T\dot{M}_d(t)\dot{x}
  \end{array}
\end{equation}
are the 
{dissipated power due to damping, and the dissipated/injected power due to the inertia variation}, respectively, and $z(t)\in\mathbb{R}$ is the state of the tank. Furthermore, let
\begin{equation}
\label{eq:tanken}
T(z)=\frac{1}{2}z^2
\end{equation}
be the energy stored in the tank. {With a slight abuse of notation, here and in the following we will use $T(t)$ to indicate the value of $T(z)$ at time $t$. We will hereafter assume that $\exists \delta,\bar{T}$, with $0<\delta<\bar{T}$, such that $\delta \leq T(t) \leq \bar{T}$.}
 The upper bound is guaranteed by the parameters $\varphi\in\{0,1\}$ and $\gamma \in\{0,1\}$ that disable the energy storage in case a maximum, application dependent, limit \mbox{$\bar{T} \in \mathbb{R}^+$} is reached. It is necessary to bound the available energy because, if there were no bounds, the energy could become very big as time increases and, even if the system keeps on being passive, it would be possible to implement practically unstable behaviors \cite{lee}. Then, we define:
 \begin{eqnarray}
 \varphi = \left\{
 \begin{array}{cl}
 1 & if\text{ }T(t)\leq \bar T \\
 0 & otherwise\\
 \end{array}
 \right.
 \label{eq:sigma}
 \gamma = \left\{
 \begin{array}{cl}
 \varphi & if\text{ }\dot{M}_d(t)\leq 0 \\
 1 & otherwise\\
 \end{array}
 \right.
 \label{eq:gamma}
 \end{eqnarray}
 
where $\varphi$ enables/disables the storage of dissipated energy and $\gamma$ enables/disables the injection \mbox{$\left(\dot{M}_d(t) \leq 0\right)$} of energy in the tank due to the inertia variation but it always allows to extract  \mbox{$\left(\dot{M}_d(t) > 0\right)$} energy from the tank.
The lower bound, required for avoiding singularities in \eqref{eq:newdynamic}, is guaranteed by carefully planning/forbidding the extraction of energy when $\delta>0$ is reached. Notice that the extraction of energy is due only to $P_M$. The tank initial state is
set to $z(0)$ such that \mbox{$T(z(0))>\delta$}.  

Furthermore, we have that
\begin{equation}
\label{dotT}
\dot{T}(t)=z \dot{z}=\varphi P_D-\gamma P_M
\end{equation}

{Define now $\lambda_j\left(\dot{M}_d(t)\right)$ as the $j$-th eigenvalue of $\dot{M}_d(t)$ at time $t$. We now introduce $\lambda_M$ as the maximum value for the eigenvalues of {$\dot{M}_d(t)$} over the time interval in which the parameter variation occurs, according to Assumption~\ref{ass:limitedtime}, namely:
	\begin{equation}
	\lambda_M = \max_{t\in\left[t_i,t_f\right]}\max_{j=1,\ldots,n}\lambda_j\left(\dot{M}_d(t)\right)
	\label{eq:lambdaM}
	\end{equation}
It is worth noting that this maximum is well defined, since the time interval is supposed to be bounded, {as well as the variation in the inertia matrix}.	
	}

Consider the following bound on the velocity
\begin{equation}
-\dot{x}_M \leq \dot{x} \leq \dot{x}_M
\label{eq:bound}
\end{equation}
Then, the following proposition holds. 

\begin{proposition}
\label{prop:passivity2}
If 
\begin{equation}
\label{eq:cond2}
\frac{1}{2}\lambda_{M}{\Vert \dot{x}_M \Vert}^2\left(t_f-t_i\right) \leq T(t_i)-\delta
\end{equation}
then the system in \eqref{eq:newdynamic} is passive with respect to the
input-output pair $(F_{ext},\dot{x})$ and with storage function 
\begin{equation}
W(\dot{x}(t),z(t))=H(\dot{x}(t))+T(z(t))
\label{eq:W}
\end{equation}
where $H(\dot{x}(t))$ and $T(z(t))$ are defined in
\eqref{eq:storagenew} and in \eqref{eq:tanken}.
\end{proposition}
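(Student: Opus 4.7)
The plan is to mirror the structure of the proof of Proposition~\ref{prop:passivity1}, but with the augmented storage function $W$, and then additionally to verify that condition \eqref{eq:cond2} keeps the tank dynamics well posed (i.e.\ prevents $z \to 0$). First I would compute $\dot{W}$ along trajectories of \eqref{eq:newdynamic}. The intermediate identity already derived in Proposition~\ref{prop:passivity1} gives
\[
\dot{H} = \dot{x}^T F_{ext} + P_M - P_D,
\]
with $P_D$ and $P_M$ as in \eqref{eq:PdPm}, while \eqref{dotT} yields $\dot{T} = \varphi P_D - \gamma P_M$. Adding these produces
\[
\dot{W} = \dot{x}^T F_{ext} + (\varphi - 1) P_D + (1 - \gamma) P_M .
\]

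Next I would show that the two correction terms are always nonpositive, by a case analysis matching the switching rules \eqref{eq:sigma}--\eqref{eq:gamma}. Since $P_D \geq 0$ and $\varphi \in \{0,1\}$, the first term is trivially $\leq 0$. For the second term, if $\dot{M}_d(t) > 0$ then $\gamma = 1$ and $(1-\gamma)P_M = 0$; if instead $\dot{M}_d(t) \leq 0$ then $P_M \leq 0$ and $\gamma = \varphi$, so $(1-\gamma)P_M = (1-\varphi)P_M \leq 0$. Hence $\dot{W} \leq \dot{x}^T F_{ext}$ pointwise, and integrating while using $W(t) \geq 0$ (because both $H$ and $T$ are nonnegative) yields the standard passivity inequality $\int_0^{t} \dot{x}^T F_{ext}\, d\tau \geq -W(0)$ with storage function $W$.

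It remains to show that the trajectories of \eqref{eq:newdynamic} are well defined, i.e.\ that $T(t) \geq \delta > 0$ so that the $1/z$ factors do not diverge. Outside $\left[t_i,t_f\right]$ Assumption~\ref{ass:limitedtime} gives $\dot{M}_d = 0$, hence $P_M = 0$ and $\dot{T} = \varphi P_D \geq 0$, so the tank can only grow. Inside $\left[t_i,t_f\right]$ the worst-case depletion rate is obtained when $\gamma = 1$ and $\varphi = 0$; using the eigenvalue bound \eqref{eq:lambdaM} together with the componentwise velocity bound \eqref{eq:bound} one gets
\[
-\dot{T} \leq \gamma P_M \leq \tfrac{1}{2}\,\lambda_{M}\,\Vert \dot{x}_M \Vert^2 .
\]
Integrating on $[t_i,t]$ and invoking hypothesis \eqref{eq:cond2} then gives $T(t) \geq T(t_i) - \tfrac{1}{2}\lambda_{M}\Vert \dot{x}_M\Vert^2 (t_f - t_i) \geq \delta$, so the tank dynamics stays away from the singularity throughout $[t_i,t_f]$.

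I expect the main subtlety to lie in the case analysis for the sign of $(1-\gamma)P_M$: one has to check that the switching law \eqref{eq:gamma} for $\gamma$ is precisely what is needed to cancel any energy-producing contribution of the inertia variation, \emph{regardless} of whether the tank has already reached its upper bound $\bar{T}$. Once this is sorted out, the remainder is a direct combination of the derivative identity, the symmetric-matrix eigenvalue bound $\dot{x}^T \dot{M}_d \dot{x} \leq \lambda_M \Vert \dot{x}\Vert^2$, and hypothesis \eqref{eq:cond2}, following the same integration step used in Proposition~\ref{prop:passivity1}.
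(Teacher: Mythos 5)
Your proof is correct, and its computational ingredients coincide with the paper's: the identity $\dot{W}=\dot{x}^TF_{ext}-(1-\varphi)P_D+(1-\gamma)P_M$ and the bound $P_M\leq\frac{1}{2}\lambda_M\Vert\dot{x}_M\Vert^2$ obtained from the symmetry of $\dot{M}_d(t)$ and \eqref{eq:bound}. What genuinely differs is the logical role you assign to hypothesis \eqref{eq:cond2}. You observe that the switching law \eqref{eq:gamma} already makes both correction terms nonpositive in every case, so $\dot{W}\leq\dot{x}^TF_{ext}$ holds pointwise and the passivity inequality $\int_0^t\dot{x}^TF_{ext}\,d\tau\geq -W(0)$ follows unconditionally; \eqref{eq:cond2} is then used only to guarantee $T(t)\geq\delta$ on $\left[t_i,t_f\right]$, i.e., that the tank dynamics in \eqref{eq:newdynamic} stays away from the $1/z$ singularity and the planned extraction is feasible. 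The paper instead keeps the term $T(t_f)-T(0)$ explicit after integration, uses \eqref{eq:cond2} through the budget $T(t_f)=T(t_i)+\int P_D\,d\tau-\int P_M\,d\tau$ to argue $T(t_f)-T(0)\geq 0$, and thereby obtains the sharper endpoint bound $\int_0^{t_f}\dot{x}^TF_{ext}\,d\tau\geq -H(0)$ rather than $-H(0)-T(0)$. Your route buys a cleaner argument that makes the passivity of the interconnection hold by construction, and it sidesteps the paper's delicate step in passing from \eqref{eq:part1} to \eqref{eq:in}, where $T(0)$ is replaced by $\delta$ even though $T(0)>\delta$ makes that substitution go in the unfavorable direction; the paper's route buys the tighter supplied-energy bound and motivates why \eqref{eq:cond2} is phrased as a budget on $T(t_i)$. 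Both conclusions satisfy the definition of passivity with storage function $W$, so the proposition is established either way. One small point to tidy: the dichotomy ``$\dot{M}_d>0$ versus $\dot{M}_d\leq 0$'' is not exhaustive for matrices; the clean split mirroring \eqref{eq:gamma} is ``$\dot{M}_d\leq 0$'' (where $P_M\leq 0$ and $\gamma=\varphi$) versus ``otherwise'' (where $\gamma=1$ annihilates the term), and in both cases $(1-\gamma)P_M\leq 0$ as you need.
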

\begin{proof} {With a slight abuse of notation, we will hereafter use $W(t)$ to indicate the value of $W(\dot{x}(t),z(t))$ at time $t$.} Since $H(t)$ and $T(t)$ are positive definite, $W(t)$ is positive definite too.
The time derivative $\dot{W}(t)$ can be computed as follows
\begin{equation}
\resizebox{\hsize}{!}{$
\begin{array}{ll}
\dot{W}(t)=\dot{H}(t)+\dot{T}(t)\!\!\!&=\dot{x}^T{F_{ext}}-\left(1-\varphi\right)P_D+\left(1-\gamma\right)P_M
\end{array}
$}
\label{eq:Wderivative}
\end{equation}
Since $\varphi\in\{0,1\}$ and $P_D \geq 0$, we have that
\begin{equation}
\dot{x}^T{F_{ext}} \geq \dot{H}(t)+\dot{T}(t)-\left(1-\gamma\right)P_M
\end{equation}
and
\begin{equation}
\resizebox{\hsize}{!}{$\displaystyle\int_{0}^{t} \dot{x}^T F_{ext} \, d \tau \geq H(t)-H(0)+T(t)-T(0)-\displaystyle\int_{0}^{t} \left(1-\gamma\right)P_M \, d \tau$}
\end{equation}
If \mbox{$\dot{M}_d(t) \leq 0$}, considering the fact that \mbox{$T(t) \geq 0$} and \mbox{$H(t) \geq 0$}, we have that
\begin{equation}
\displaystyle\int_{0}^{t} \dot{x}^T F_{ext} \, d \tau \geq -H(0)-T(0)
\end{equation}
namely, that the system is passive. Since we aim at changing the inertia from a constant value to another constant value, we have \mbox{$\dot{M}_d(t) > 0$} only for a specified limited interval $\left[t_i,t_f\right]$. Furthermore, from \eqref{eq:gamma}, we have that $\gamma=1$ and thus
\begin{equation}
\label{eq:passgar1}
\displaystyle\int_{0}^{t_f} \dot{x}^T F_{ext} \, d \tau \geq H(t_f)-H(0)+T(t_f)-T(0)
\end{equation}
We have no information on the energy that will be stored in the tank in the future, i.e. $T(t_f)$, but if we guarantee that
\begin{equation}
\label{eq:passgar}
T(t_f)-T(0) \geq 0
\end{equation}
from \eqref{eq:passgar1} and \eqref{eq:passgar} and recalling that $H(t_f) \geq 0 $, we can state the following passivity balance:
\begin{equation}
\label{eq:passbal}
\displaystyle\int_{0}^{t_f} \dot{x}^T F_{ext} \, d \tau \geq -H(0)
\end{equation}
The energy stored in the tank at the end of the interval is given by the sum of the energy currently available and of the dissipated energy and the energy due to the inertia variation in the interval of variation. {According to Assumption~\ref{ass:limitedtime}, it can be computed as follows:} 
\begin{equation}
\label{eq:Ttf}
T(t_f)=T(t_i)+\displaystyle\int_{t_i}^{t_f} P_D \, d \tau - \displaystyle\int_{t_i}^{t_f} P_M \, d \tau
\end{equation}
Thus, substituting \eqref{eq:Ttf} in \eqref{eq:passgar}, it follows that the passivity of the overall system is guaranteed if 
\begin{equation}
T(t_i)+\displaystyle\int_{t_i}^{t_f} P_D \, d \tau - \displaystyle\int_{t_i}^{t_f} P_M \, d \tau-T(0) \geq 0
\label{eq:part1}
\end{equation}
Since the term due to the dissipated power is always positive, to satisfy \eqref{eq:part1}, it is sufficient that
\begin{equation}
T(t_i)- \displaystyle\int_{t_i}^{t_f} P_M \, d \tau-T(0) \geq 0
\end{equation}
and thus
\begin{equation}
\label{eq:in}
\displaystyle\int_{t_i}^{t_f} P_M \, d \tau \leq T(t_i)- \delta
\end{equation}
Since $M_d(t)$ is symmetric, $\dot{M}_d(t)$ must be symmetric too. Considering the bound on the velocity \eqref{eq:bound}{, the defintion of $\lambda_M$ in~\eqref{eq:lambdaM}, } and reminding that \mbox{$\dot{M}_d(t) > 0$}, we can write
\begin{equation}
\displaystyle\int_{t_i}^{t_f} P_M \, d \tau \leq \frac{1}{2}\lambda_M{\Vert \dot{x} \Vert}^2\left(t_f-t_i\right)\leq \frac{1}{2}\lambda_M{\Vert \dot{x}_M \Vert}^2\left(t_f-t_i\right)
\end{equation}
and thus it follows that with condition~\eqref{eq:cond2}, namely
$$
\frac{1}{2}\lambda_M{\Vert \dot{x}_M \Vert}^2\left(t_f-t_i\right) \leq T(t_i)- \delta
$$
the inequalities \eqref{eq:in} and \eqref{eq:passgar} are satisfied and thus the passivity balance in \eqref{eq:passbal} is verified, which concludes the proof. 
\end{proof}
The condition \eqref{eq:cond2} requires to store the energy in the tank by updating a continuous dynamics but in the following we will show that it is more flexible than the one stated in Prop. \ref{prop:passivity1}. Since the desired inertia and damping are parameters that can be freely chosen, provided that they are symmetric and positive definite matrices, we will consider the following assumption, which is a common choice.
\begin{assumption}
\label{ass:diagonal}
The desired inertia and damping in \eqref{eq:newadmittancemodel} are diagonal matrices and they are defined as
\begin{equation}
\scalemath{0.9}{M_d(t)=\diag{m_1(t),\dots,m_n(t)};\quad
D_d(t)=\diag{d_1(t),\dots,d_n(t)}} \nonumber
\end{equation}
Since $M_d(t)$ is diagonal, $\dot{M}_d(t)$ is diagonal too and its eigenvalues are the elements on the diagonal. 
\end{assumption}
Under Assumption \ref{ass:diagonal}, we have that the general condition of passivity \eqref{eq:cond1} becomes 
\begin{equation}
\label{eq:newcond1}
\dot{m}_j(t) \leq 2d_j(t) \quad \forall j=1, \dots, n
\end{equation} 
{Furthermore, from~\eqref{eq:lambdaM} and from Assumption~\ref{ass:diagonal}, it follows that}
\begin{equation}
\dot{m}_j(t) \leq \lambda_M \quad \forall j=1, \dots, n
\end{equation}
and thus the condition \eqref{eq:cond2} obtained exploiting energy tanks becomes
\begin{equation}
\label{eq:newcond2}
\dot{m}_j(t) \leq \dfrac{2\left(T(t_i)-\delta\right)}{{\Vert \dot{x}_M \Vert}^2\left(t_f-t_i\right)} \quad \forall j=1, \dots, n
\end{equation}
From \eqref{eq:newcond1} and \eqref{eq:newcond2} it follows that the energy tank framework is less conservative than the general passivity approach every time the following inequality holds
\begin{equation}
\label{eq:bench}
\left(T(t_i)-\delta\right) > d_j{\Vert \dot{x}_M \Vert}^2\left(t_f-t_i\right) \quad \forall j=1, \dots, n
\end{equation}
which means that the energy available in the tank is larger than the maximum energy that could be dissipated in the time interval $\left[t_i,t_f\right]$. The inequality in \eqref{eq:bench} is usually verified and, in particular, in all the experiments we performed we never observed the opposite.

Thanks to the conditions stated in Props. \ref{prop:passivity1} and \ref{prop:passivity2}, the adaptation of the parameters to react to a deviation from the nominal behavior can be performed without violating the passivity of the system and thus guaranteeing the stability of the admittance-controlled robot.

\section{EXPERIMENTAL RESULTS}
\label{experiments}
We performed experimental tests on a KUKA LWR 4+\footnote{http://www.kuka-robotics.com/en} provided with a ATI Mini 45\footnote{http://www.ati-ia.com} 6 axis F/T sensor in order to
validate the theoretical findings presented in this paper. Let $\mathsf{\left(x,y,z,r_1,r_2,r_3\right)}$ be the reference frame of the robot end-effector. Due to space limitations, we will only report plots of the robot behavior along the $\mathsf{x}$-axis (DOFs are decoupled). 
The accompanying video clip shows the experiments described in the following.

\subsection{Detection of deviations from the nominal behavior}
Preliminary experiments have been performed to test the heuristic that we introduced in Section \ref{problem}, finding the threshold $\varepsilon$ that characterizes the nominal behavior.
Since the objective of this experiment was to evaluate the performance of the detection heuristic, we did not modify the inertia and damping parameters.
 In this way, we can easily show that, whenever the user interacts with the tool stiffening his/her arm, the forces of interaction oscillate and the system becomes unstable (Fig. \ref{fig:force1}). Thus, deviations from the nominal behavior occur, \eqref{eq:epsilon} is not satisfied and the deviation can be correctly detected. 
Under Assumption \ref{ass:diagonal}, the desired inertia and damping parameters were set as the following constant diagonal matrices:
\begin{equation}
		\scalemath{0.98}{ M_d=\diag\{2, 2, 2, 0.5, 0.5, 0.5\}\ kg \quad  
		 D_d=\diag\{D_{d1}, D_{d2}\}\ \nonumber}
\end{equation}
where $ D_{d1}=\diag\{30, 30, 30\}\ \nicefrac{Ns}{m} \nonumber $ and $ D_{d2}=\diag\{3, 3, 3\}\ \nicefrac{Nms}{rad} \nonumber $.
The interaction forces measured by the F/T sensor are reported in Fig. \ref{fig:force1}, while the evolution over time of $\psi$ as defined in \eqref{eq:epsilon} is shown in Fig. \ref{fig:epsilon}. The threshold in \eqref{eq:epsilon} was empirically chosen as $\varepsilon=10$. 
	\begin{figure}[tbp]
  		\centering
  		\includegraphics[width=\columnwidth]{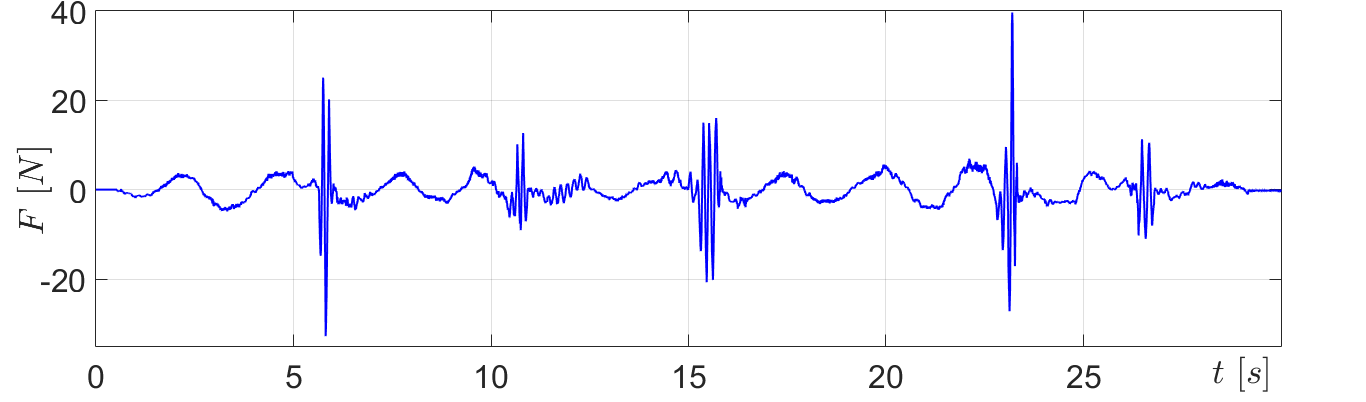}
  		\caption{Force along the $\mathsf{x}$-axis measured by the F/T sensor.}
 		\label{fig:force1}
	\end{figure}
	\begin{figure}[tbp]
 		 \centering
  		\includegraphics[width=\columnwidth, height=3.6cm]{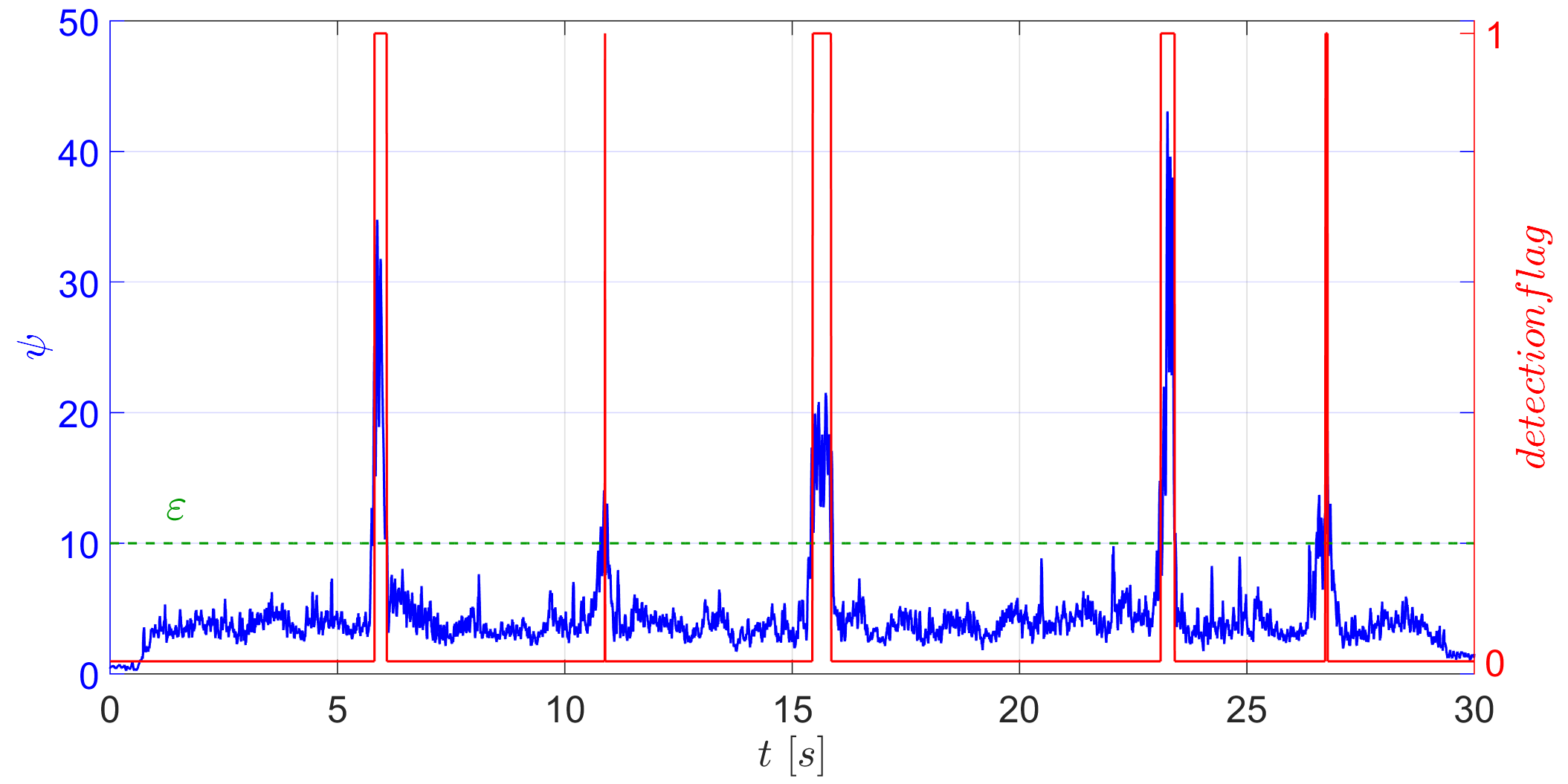}
  		\caption{Evolution over time of $\psi$. A detection flag (red line) is added to show when the heuristic detects deviations from the nominal behavior.}
  		\label{fig:epsilon}
	\end{figure}
{In order to avoid false positives, the value of $\psi$ was evaluated computing the moving average over a time window of duration $30\ ms$. A flag was then raised when such moving average was larger than the threshold $\varepsilon$: this flag is depicted with a red line in Fig.~\ref{fig:epsilon}. The average measured detection time was equal to $0.165\ s$. From the user's perspective, this is a sufficiently short amount of time, that allows to adapting the parameters before the user can actually feel the instability.}
From the results of the experiments, the heuristic was always able to detect the deviations from the nominal behavior, and thus the adaptation of parameters can be performed accordingly.

\subsection{Parameter adaptation}
We performed several experiments, in different scenarios, to validate the passivity framework for parameter adaptation. In particular, relative variations of $M_d(t)$ and $D_d(t)$ were defined according to different functional relationships. Since $M_d(t)$ and $D_d(t)$ can be freely modified, if condition \eqref{eq:cond1} or \eqref{eq:cond2} are satisfied, these functional relationships can be arbitrarily defined, based on the task that has to be accomplished.
We will hereafter show two representative examples: in the former, we consider constant damping and variable inertia. This represents the most problematic case, since variations on $D_d(t)$ do not negatively influence passivity, while variations on $M_d(t)$ could possibly break the passivity of the system, as shown in Section \ref{passivity}. In the latter, we aim at keeping a constant ratio between the inertia and the damping. This choice, as discussed in \cite{lecours}, aims at maintaining a similar dynamics of the system after the variation, which is more intuitive for the operator. 

\subsubsection{Passivity framework for parameter adaptation with constant damping and variable inertia}
Once deviations from the nominal behavior have been detected, then the admittance parameters have to be adapted for restoring the nominal behavior. In this experiment we implemented the strategy presented in Section \ref{passivity} and we investigated the differences between the conditions \eqref{eq:cond1} and \eqref{eq:cond2}. According to the theory developed in Section \ref{passivity}, the following design choices were made.
\setdefaultleftmargin{0.25cm}{}{}{}{}{}
\begin{itemize}
\item The energy thresholds have been selected as $\delta=0.1\ J$ and $\bar{T}=5\ J$. {The initial value for the state of the tank is set to $z(0)=2$, so that the initial energy contained in the tank, computed according to~\eqref{eq:tanken} results in \mbox{$T(0)=2\ J > \delta$}.} 
\item The bounds on the velocity are given by \mbox{$\dot{x}_M=\{\dot{x}_{M1}, \dot{x}_{M2}\}$}, 
where
\begin{eqnarray}
\dot{x}_{M1}=\{1.3, 1.5, 1.3\}\ \nicefrac{m}{s} \nonumber \quad
\dot{x}_{M2}=\{0.9, 0.9, 0.9\}\ \nicefrac{rad}{s} \nonumber
\end{eqnarray} 
\item The diagonal elements of the damping matrix $D_d$ are kept constant to $5\ \nicefrac{Ns}{m}$ and $0.5\ \nicefrac{Nms}{rad}$ for the translations and rotations, respectively, while the initial inertia is set to 
\mbox{$
M_d(0)=\diag\{2, 2, 2, 0.5, 0.5, 0.5\}\ kg
$}.
Since we want to show the increased flexibility of the tank approach with respect to the standard passivity condition, we will compare the results of the inertia variations as defined in \eqref{eq:newcond1} and \eqref{eq:newcond2}. In particular, {considering a variation of the inertia performed in the time interval $\left[t_i,t_f\right]$, integrating~\eqref{eq:newcond1} we define the following fixed step variation:}    
\begin{equation}
{M_d(t_f)-M_d(t_i)=2D_d \left(t_f - t_i\right) }
\label{eq:step}
\end{equation}
{In the experiments, we utilized $\left(t_f - t_i\right) = 3\ ms$.} 
{The increment in~\eqref{eq:step} is the}
 maximum increase allowed under the conservative condition \eqref{eq:cond1}. However, as shown in Section \ref{passivity}, the energy tank framework allows much more aggressive inertia variations if the inequality \eqref{eq:bench} is satisfied. In particular, under condition \eqref{eq:cond2}, {integrating~\eqref{eq:newcond2}, we obtain the following, less conservative, condition:} 
\begin{equation}
M_d(t_f)-M_d(t_i)=\dfrac{2\left(T(t_i)-\delta\right)}{{\Vert \dot{x}_M \Vert}^2}
\label{eq:step_tank}
\end{equation}
{It is worth noting that~\eqref{eq:step_tank} represents the maximum allowed inertia variation, based on the energy contained in the tank at time $t=t_i$. In practical cases, this value can be very large: thus, direct application of~\eqref{eq:step_tank} would lead to an excessively large inertia variation. For this reason, in the experiments we utilized the following upper-bound on the allowed inertia variation
	\begin{equation}
	M_d(t_f)-M_d(t_i) \leq \Delta M
	\label{eq:deltaM}
	\end{equation}
In the experiment $\scalemath{0.88}{\Delta M = \diag\{1.5, 1.5, 1.5, 0.15, 0.15, 0.15\}\ kg}$ was chosen empirically.}
\end{itemize}
\indent Figure \ref{fig:inertia_bench} shows the comparison between the inertia variations as defined in \eqref{eq:cond1} and \eqref{eq:cond2}. Furthermore, the figure shows the detection flag, which is raised when the heuristic detects a deviation from the nominal behavior. In correspondence of the detected deviations, the inertia varies according to \eqref{eq:step} (blue line) and \eqref{eq:step_tank} (green line). As it can be seen, the exploitation of the energy tank allows to reach higher values of the inertia, with respect to the standard passivity condition which is more conservative. 
%
%
\begin{figure}[tbp]
  \centering
  \includegraphics[width=\columnwidth]{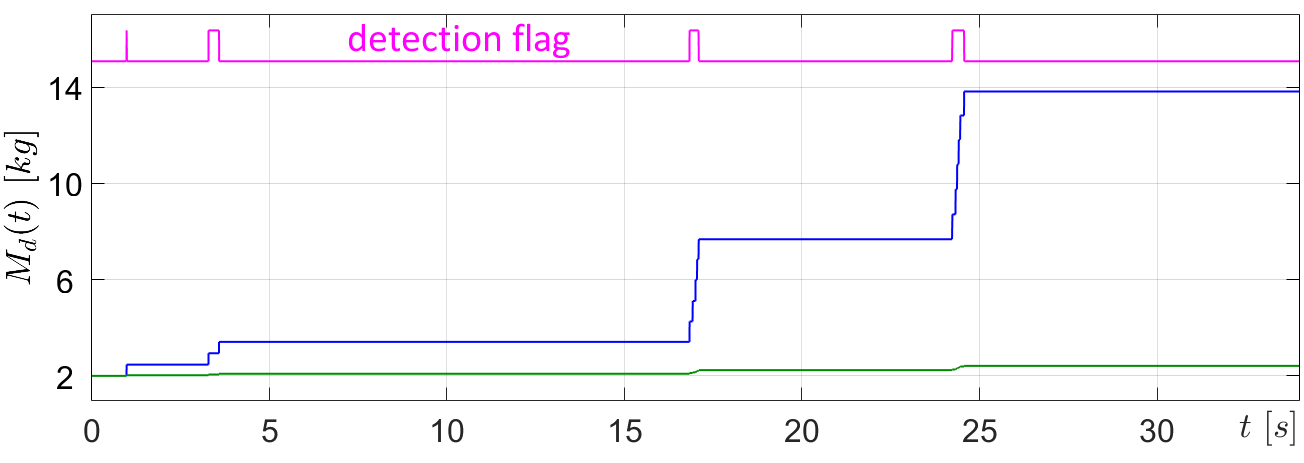} 
  \caption{Comparison between the inertia variation with the energy tank condition (blue line) and the standard passivity condition (green line).}
  \label{fig:inertia_bench}
\end{figure}

In Fig. \ref{fig:ten} the behavior of the tank energy $T$ is shown. At the beginning, the tank begins to store the energy dissipated by the system while the user moves the robot. Then, when deviations are detected, the energy decreases since the variation of the inertia requires energy to be implemented. 
\begin{figure}[tbp]
  \centering
  \includegraphics[width=\columnwidth]{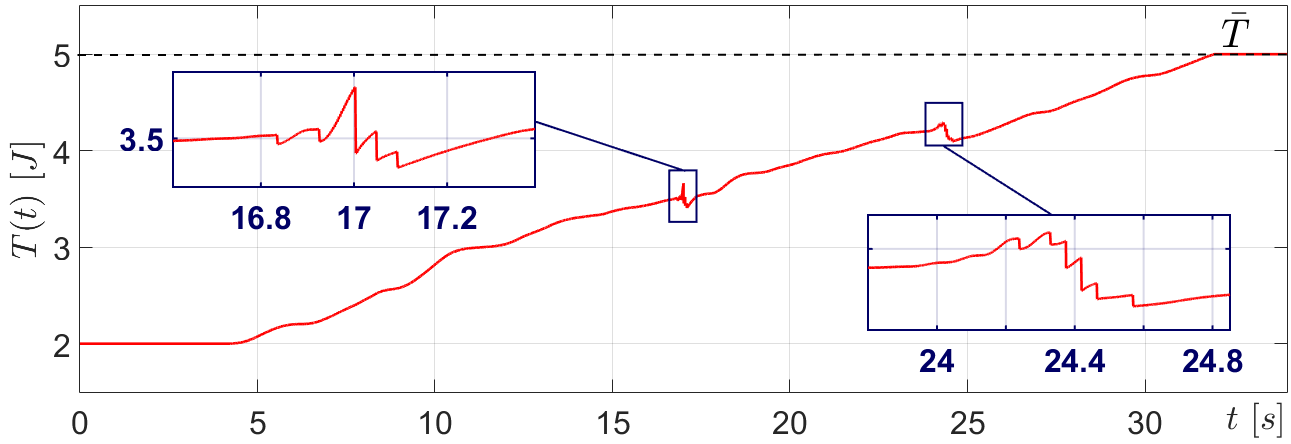} 
  \caption{Evolution over time of the energy level of the tank.}
  \label{fig:ten}
\end{figure}

\subsubsection{Passivity framework for parameter adaptation with constant inertia to damping ratio}
In this experiment, we kept a constant ratio between the inertia and the damping in order to maintain a similar dynamics of the system after the variation, which is more intuitive for the operator \cite{lecours}. The design choices are the same as the previous experiment except for the damping matrix that is variable and it is designed to guarantee a constant component-wise inertia to damping ratio. 
The initial damping is set to $15\ \nicefrac{Ns}{m}$ and $2\ \nicefrac{Nms}{rad}$ for the translations and rotations, while in \eqref{eq:deltaM} we choose $\scalemath{0.9}{\Delta M = \diag\{0.09, 0.09, 0.09, 0.012, 0.012, 0.012\}\ kg}$

Figure \ref{fig:all_tank} shows the results of the experiment of parameter adaptation with the energy tank framework. In particular, Fig. \ref{fig:pos_vel} shows the $\mathsf{x}$ component of the Cartesian position and velocity, while Fig. \ref{fig:force_tank} shows the forces measured by the F/T sensor. The user stiffens his/her arm at $t_1=5.6\ s$ and $t_2=11.1\ s$, an oscillating behavior starts to occur (yellow areas), but the adaptation of the parameters allows to stabilize the system $0.3-0.4\ s$ after the occurrence of the oscillations. From the user perspective, this is a sufficiently short amount of time, since the adaptation of the parameters is achieved before the user can actually feel the instability.   
\begin{figure}[tbp]
  \centering
    \subfigure[Cartesian position (solid green line) and velocity (dash blue line) along the $\mathsf{x}$-axis.]{\includegraphics[width=\columnwidth]{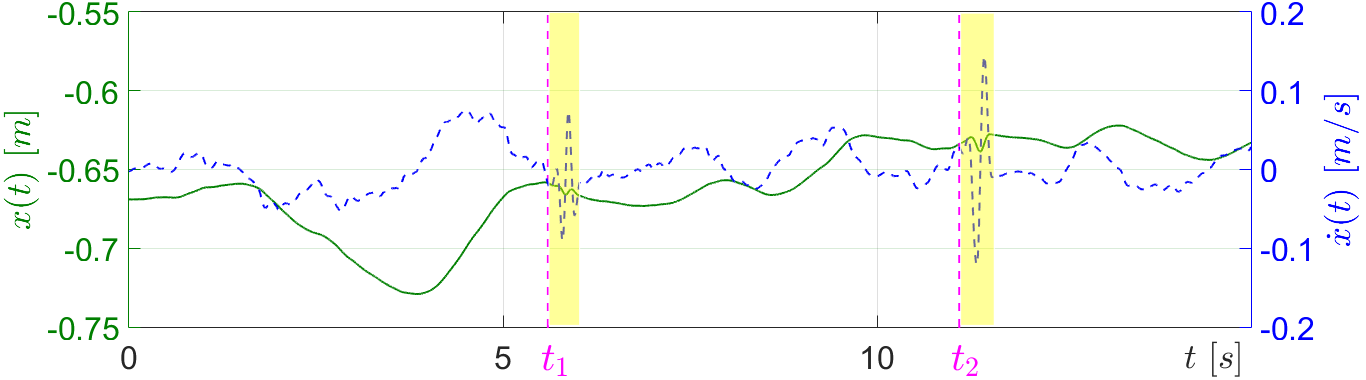} \label{fig:pos_vel}}
      \subfigure[Force along the $\mathsf{x}$-axis measured by the F/T sensor.]{\includegraphics[width=\columnwidth]{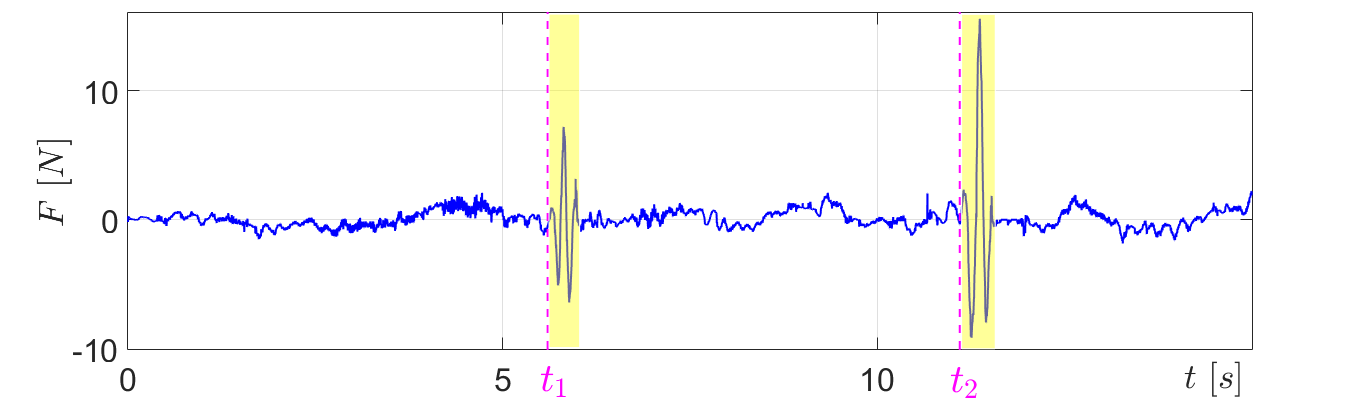} \label{fig:force_tank}}
  \caption{Parameter adaptation with the energy tank framework.}
  \label{fig:all_tank}
\end{figure}


\section{CONCLUSIONS}
\label{conclusion}

In this work, we introduced a heuristic that allowed to easily detect the deviations from the nominal behaviors and we presented a passivity-framework for adapting the parameters of the admittance control while maintaining the passivity of the overall system. \\Future works aim at improving the heuristic that we found, in order to render the detection independent from an application-dependent threshold. In addition, a forgetting-factor will be implemented in order to reduce inertia and damping factors when the operator relaxes his/her arm.

\bibliographystyle{IEEEtran}
\bibliography{biblio}

\end{document}